\newcommand{\semantics}[1]{[\![ #1 ]\!]}
\newcommand{\Rel}{\mathbf{Rel}}
\newcommand{\FdVect}{\mathbf{FdVect}}
\newcommand{\cat}[1]{\mathbf{#1}}
\newcommand{\cC}{\cat{C}}
\newcommand{\VRel}{\mathbf{V}\mbox{-}\mathbf{Rel}}
\newcommand{\V}{\mathbf{V}}
\newcommand{\e}{\mathsf{e}}
\newcommand{\tensor}{\bullet}
\newcommand{\id}{id}
\newcommand{\relto}{\nrightarrow}
\begin{document}

\title*{Fuzzy Generalised Quantifiers  for Natural Language in  Categorical Compositional Distributional Semantics}
\author{M\v{a}tej Dost\'{a}l,  Mehrnoosh Sadrzadeh,  Gijs Wijnholds}

\institute{M\v{a}tej Dost\'{a}l \at Czech Technical University,  Prague \email{dostamat@fel.cvut.cz}
\and Mehrnoosh Sadrzadeh \at University College London, UK \email{m.sadrzadeh@ucl.ac.uk}
\and Gijs Wijnholds \at Queen Mary University of London , UK \email{g.j.wijnholds@qmul.ac.uk}}
%
%
\maketitle

\abstract{
Recent work on compositional distributional models shows that bialgebras over finite dimensional vector spaces can be applied to treat generalised quantifiers for natural language. That technique requires one to construct the vector space over powersets, and therefore is computationally costly. In this paper, we overcome this problem by considering fuzzy versions of quantifiers along the lines of Zadeh, within the category of many valued relations. We show that this category is a concrete instantiation of the compositional distributional model. We show that the semantics obtained in this model is equivalent to the semantics of the fuzzy quantifiers  of Zadeh. As a result,  we are now able to treat fuzzy quantification without requiring a powerset construction. 
%
}

\section{Introduction}
\label{sec:Intro}

Distributional semantics is inspired by the idea of Firth that   words can be represented by the company they keep \cite{Firth}. This idea was  formalised by computational linguistics and by information retrieval researchers;  they represented words by the  contexts in which they often occurred. Rubenstein and Goodenough introduced the concept  of  a co-occurrence matrix: a matrix whose columns are context words, whose rows are target words, and whose entries are [a function of] the number of times the target and context words occurred together in a window of a fixed size \cite{Rubenstein}. Later Salton, Wong and Yang employed similar ideas to index words in a document \cite{Salton}. In either setting,  a word is represented by a vector:  the row associated to it in a co-occurrence matrix, which is a  vector in the vector space spanned by the context words.  This representation has been applied  to  natural language tasks such as word similarity, disambiguation, and entailment \cite{Rubenstein,Curran,Turney,Schutze,Weeds,Geffet,Kotlerman} and information retrieval tasks, such as  clustering, indexing, and search \cite{Salton,Landauer,lin1998automatic}. 

A challenge to distributional semantics was that its underlying hypothesis did make sense for words, but no longer for  complex language units such as sentences. In an attempt to extend distributional semantics from words to sentences, Clark and Pulman put forward the idea of tracing the parse tree of a sentence,  forming the tensor product of words and their grammatical roles, and representing the sentence by the resulting vector \cite{ClarkPulman}. Due to the high dimensionality of the resulting space (in which the vector of the sentence lives), this idea itself did not lead to tangible applications and experimental results in language tasks. It was however followed up by a series of related work, referred to by \emph{compositional distributional semantics},  based on the principle of compositionality of Frege, that the meaning of a sentence is a function of the meanings of its parts. The approaches within this field,  mapped the grammatical structure of  sentences to  linear maps that acted  on the representations of the words therein, for example see the work of  \cite{BaroniZam,Maillard2014,Krishnamurthy2013,Steedman2013,Coeckeetal}. These models provided  concrete vector constructions and were tested on natural language tasks such as sentence similarity, disambiguation, and entailment 
\cite{baroni-etal-2014,MultiStep,GrefenSadrCL,KartSadr,KartSadrLACL,Bankovaetal}. They were however mostly focused on elementary fragments of language and left the treatment of  logical operations such as conjunction, disjunction, and quantification to further work. Recently,  quantification was tackled  in  \cite{HedgesSadr,Sadrzadeh16} based on  a model that sits within the setting of \cite{Coeckeetal}. This setting is based on theory of  compact closed categories \cite{KellyLaplaza80}. It was shown that the bialgebras over these categories  \cite{McCurdy,bonchi14} can be used to model the generalised quantifiers of Barwise and Cooper \cite{BarwiseCooper81}. An instantiation of the abstract  setting to category of sets and relations  provided an  equivalent semantics to the  set theoretic semantics of generalised quantifiers. 

This paper stems from a  theoretical  question and a practical concern. On the theoretical side we have a  model for forming vectors for quantified sentences in distributional semantics \cite{HedgesSadr}. On the practical side, the vector space instantiation of this model relies on vector spaces being spanned by a power set object; this leads to an exponential increase in the size of the vector space and thus implementing it becomes computationally costly. 
On the other hand, there is the work of Zadeh in fuzzy quantifiers for natural language \cite{Zadeh1983}, which similar  to previous work provides a quantitative interpretation for the generalised quantifiers of Barwise and Cooper. Fuzzy sets have been applied to a variety of different domains, including to computational linguistics and information retrieval, for example see \cite{Novak,Cock,Zadeh:1996,Bezdek1978}. 
Given the mathematical equivalence  between fuzzy sets and vectors, the question arises whether there is a connection between the two settings of   vector representations of quantified sentences and their fuzzy set counterparts. In this paper we answer the question in positive and thus provide pathways to address the practical concern. As a result, we can now work with the fuzzy set counterparts  of compositional distributional vectors and avoid the pitfall of having to compute within an exponentially sized vector space.


The outline of the paper is as follows: we recall basic definitions of compact closed categories  and bialgebras over them and review how $\Rel$ and $\VRel$ are examples thereof. We then go through  the fact that the category $\VRel$  of sets and many valued relations models fuzzy sets and a logic over them. We  define in  $\VRel$  the  many valued versions  of the abstract quantifier interpretations  of  the setting of previous wok, where we worked with non-fuzzy sets and quantifiers \cite{HedgesSadr}. We show how Zadeh's fuzzy quantifiers  can be recast categorically in this setting and  prove that  Zadeh's  fuzzy  semantics of  quantified sentences is equivalent to  their corresponding  bialgebraic treatment. Whereas Zadeh's developments are not, at least explicitly, based on the  grammatical structure of  sentences, this result  indicates  that they do  inherently  follow the same composition principles as the ones  employed in compositional distributional semantics. We conclude our theoretical contributions by remarking on how  the degrees of truth obtained in the fuzzy interpretations relate to the absolute truth values of previous work. Overall,  we have taken  a step forward towards implementing and experimenting with quantifiers in distributional semantics, we,  however, leave experimenting with this model  to another paper.

This paper builds on the developments of a  previous technical report \cite{DostalSadr}.

\section{Dedication}
In 2001,  the second author of this article defended  her masters thesis under the supervision of Mohammed Ardeshir in Sharif University of Technology, Tehran, Iran. The content of the thesis was based on Brouwer's interpretation of Intuitionistic Logic and the motivation behind it came from  a mathematical logic course taught by Ardeshir on G\"odel's  incompleteness results, his work  on intuitionistic logic, and his translation of intiotionistic logic to modal logic $S4$.  The material taught in that course encouraged the second author to study the relationship between modal and constructive logics, work  on  a notion of constructive knowledge in epistemic logic \cite{MarionSadr2004,Sadr2003}, and    on dynamics of  belief update  in   logics based on monoids and lattices, including Heyting Algebras  \cite{SadrThesis,DyckhoffSaftTruf2013}. 

 The connection between this article and Areshir's work is through the relationship between intuitionistic logic and many valued logics.   G\"odel's observation that intuitionistic loigic cannot be characterised by finite truth tables \cite{Godel1932}, led to the axiomatisaiton suggested by Dummett in what is nowadays known as the G\"odel-Dummett logic \cite{Dummett1959}. 
Many valued  or fuzzy logics formalise the theory of fuzzy sets, put forwards by Zadeh \cite{Zadeh1965}.  Fuzzy sets assign a degree of membership to the elements of a set. This degree is usually a real number in the unit interval $[0,1]$. The degree of membership is used to define a truth-value between 0 and 1 for the formulae of a many valued logic. In these logics, 0 is still false and 1 is still true, a number between the two is a degree of truth.  

Algebraically speaking, a many valued logic is put together by a bounded lattice and a monoid. The G\"odel-Dummett logic has alternatively been seen as a many valued  logic where the monoid multiplication is idempotent and thus it coincides with the lattice operation of least upper bound.

\section{Generalised Quantifiers in Natural Language}
\label{sec:GenQuantNat}

We briefly review the theory of generalised quantifiers in natural language as presented in \cite{BarwiseCooper81}.  
Consider the fragment of English generated  by the following context free grammar: 
%
%
\vspace{-1em}
\begin{center}
\begin{tabular}{@{\hskip 0.5em}c@{\hskip 0.3em}c@{\hskip 0.4em}l}
S & $\to$ & NP VP \\
VP & $\to$ & V NP \\
NP &$\to$ & Det N\\
NP & $\to$ &  John, Mary, something, ...\\
N& $\to$ &   cat, dog, man, ...\\
VP & $\to$ &  sneeze, sleep, ...\\
V & $\to$ &   love, kiss, ...\\
Det & $\to$ &  some, all, no, most, almost all, several, ... 
\end{tabular}
\end{center}
A model for the language generated by this grammar  is a pair $(U, \semantics{\ })$, where $U$ is a universal reference set and $\semantics{\ }$ is an inductively defined interpretation function presented below. 
\begin{enumerate}
\item $\semantics{\ }$ on terminals:
\begin{enumerate}
\item The interpretation of a  determiner  $d$ generated by  `$\mbox{Det} \to d$'   is  a map  with the following type:
\[
 \semantics{d} \colon {\cal P}(U) \to {\cal P}{\cal P}(U)
\]
It  assigns to each $A \subseteq U$, a family of subsets of $U$. The images of these interpretations are referred to as \emph{generalised quantifiers}. For logical quantifiers, they are  defined as follows:
\begin{eqnarray*}
\semantics{\mbox{some}}(A) &=& \{X \subseteq U \mid X \cap A \neq \emptyset\}\\
\semantics{\mbox{every}}(A) &=& \{X \subseteq U \mid A \subseteq X\}\\
\semantics{\mbox{no}}(A) &=& \{X \subseteq U \mid  A \cap X = \emptyset\}\\
\semantics{n}(A) &=& \{X \subseteq U \mid \ \mid X \cap A \mid = n\}
\end{eqnarray*}

\noindent
A similar method is used to define non-logical quantifiers, for example ``most A'' is defined to be the set of subsets of U that has `most' elements of A, ``few A'' is the set of subsets of U that contain `few' elements of A, and similarly for `several' and `many'.  

\noindent
Generalising the two cases above, provides us with the following definition for any generalised quantifier $d$:
\begin{eqnarray*} \label{eq:genquant}
 \semantics{d}(A)  &=& \{X \subseteq U \mid  \quad X \ \mbox{has} \ d \ \mbox{elements of} \ A\}
\end{eqnarray*}

\item The interpretation of a  terminal $y \in \{np, n, vp\}$ generated by either of the rules `NP $\to$ np, N $\to$ n, VP $\to$ vp'  is $\semantics{y} \subseteq U$. That is, noun phrases, nouns and verb phrases are interpreted as subsets of  the reference set. 
\item The interpretation of a  terminal $y$ generated by the rule V $\to$ y is  $\semantics{y} \subseteq U \times U$. That is,  verbs are interpreted as binary relations over the reference set.  
\end{enumerate}
\item $\semantics{\ }$ on non-terminals:
\begin{enumerate}
\item The interpretation of  expressions generated by the  rule `$\mbox{NP} \to \mbox{Det N}$'   is  as follows:
\begin{eqnarray*}
&& \semantics{\mbox{Det N}} = \semantics{d}(\semantics{n}) \ \mbox{where} \ X \in \semantics{d}(\semantics{n}) \ \mbox{\bf iff}\\
&& X \cap \semantics{n} \in \semantics{d}(\semantics{n}) \ \mbox{for} \ \mbox{Det} \to d \ \mbox{and} \  \mbox{N} \to n 
\end{eqnarray*}


\item The interpretations of expressions generated by other rules are as usual:
\begin{eqnarray*}
\semantics{\mbox{V NP}} &=&    \semantics{v}(\semantics{np})  \\
\semantics{\mbox{NP VP}} &=& \semantics{vp}( \semantics{np})
\end{eqnarray*}
Here,  for  $R \subseteq U \times U$ and $A \subseteq U$, by $R(A)$ we mean the forward image of $R$ on $A$, that is $R(A) = \{y \mid (x,y) \in R, \mbox{for}  \ x \in A\}$. To keep the notation unified, for $R$ a unary relation $R \subseteq U$, we use the same notation and define $R(A) = \{y \mid y \in R, \mbox{for} \ x \in A \}$, i.e. $R \cap A$. 
\end{enumerate}
\end{enumerate}
The expressions generated by the rule `NP $\to$ Det N' satisfy a property referred to by \emph{living on} or \emph{conservativity}, defined below.

\begin{definition}\label{def:livingon}
For a terminal $d$ generated by the rule `Det $\to d$', we say that  $\semantics{d}(A)$  lives on $A$ whenever   $X \in \semantics{d}(A)$ iff  $X \cap A \in \semantics{d}(A)$, for $A, X \subseteq U$. Whenever this is the case, the quantifier $\semantics{d}$ is called a {\bf conservative} quantifier. 
\end{definition}
\noindent Barwise and Cooper argue that conservativity is a property of natural language quantifiers. This is certainly the case for the quantifiers generated  from the grammar given above. Thus for  the rest of the paper, we are assuming that our quantifiers are conservative. 
The `meaning' of a sentence  in this setting is its truth value. This is defined for any general sentence as follows:
\begin{definition}
\label{def:truth-genquant}
A sentence is true  iff $\semantics{\mbox{NP VP}} \neq \emptyset$ and false otherwise.
\end{definition}

\noindent
For the special cases of quantified subject and object phrases of interest to this paper,  a truth value is defined as follows: 

\begin{definition}
\label{def:truth-genquantMain}

\begin{enumerate}
\item
A sentence of the form  `Det N VP'  is \emph{true}  iff $\semantics{\mbox{Det N VP}} = \semantics{vp} \cap 
\semantics{n} \in \semantics{\mbox{Det N}}$ and \emph{false} otherwise. 
\item A sentence of the form `NP V Det N'  is \emph{true} iff  $\semantics{\mbox{NP V Det N}}  =   \semantics{n} \cap 
\semantics{v}(\semantics{np}) \in \semantics{\mbox{Det N}}$ and \emph{false} otherwise. \end{enumerate}
\end{definition}
As examples, first consdier sentence `some men sneeze'   with a quantifier at the subject phrase. This sentence is true iff $\semantics{\text{sneeze}} \cap \semantics{\text{men}} \in \semantics{\mbox{some men}}$, that is, whenever the set of things that sneeze and are men is a non-empty set.   Part of this meaning is obtained by following the inductive definition of $\semantics{\ }$ and part of it by applying Definition \ref{def:truth-genquantMain}.     The inductive case 2.b tells us that the semantics of this sentence is $\semantics{\mbox{NP \ VP}} = \semantics{\text{sneeze}}(\semantics{\mbox{some men}})$, where $\semantics{\text{NP}}$ is obtained by case 2.a and by unfolding it to $\semantics{\mbox{Det N}}$.  These unfoldings, when used in Definition \ref{def:truth-genquantMain}, provides us with the suggested  meaning above, that is \emph{true} iff $\semantics{{vp}} \cap \semantics{{n}} \in \semantics{\mbox{Det N}}$ and false otherwise. 
Similarly, as an example of a  sentence with a quantified phrase at its object position, consider   `John liked some trees' . This is  true iff $\semantics{\text{trees}} \cap \semantics{\text{like}}(\semantics{\text{John}}) \in \semantics{\mbox{some trees}}$, that is, whenever, the set of things that are liked by John and are trees is a non-empty set. Similarly, the sentence `John liked five trees' is true iff the set of things that are liked by John and are trees has five elements in it.

\section{Category Theoretic  Definitions}
\label{sec:PrelimCategoryTh}

This subsection briefly reviews compact closed categories and bialgebras. For a formal presentation, see \cite{KellyLaplaza80,Kock72,McCurdy}.  A compact closed category $\cC$ has objects $A, B$, morphisms $f \colon A
\to B$, and a monoidal tensor $A \otimes B$ that has a unit $I$; that is, we have $A \otimes I \cong I \otimes A \cong A$. Furthermore, for
each object $A$ there are two objects $A^r$ and $A^l$ and  the
following morphisms:
\begin{align*}
A \otimes A^r \stackrel{\epsilon_A^r} {\longrightarrow} \; 
I
\stackrel{\eta_A^r}{\longrightarrow} A^r \otimes A \\
A^l \otimes A \stackrel{\epsilon_A^l}{\longrightarrow} \; I
\stackrel{\eta_A^l}{\longrightarrow} A \otimes A^l\
\end{align*}
These morphisms satisfy the following equalities, where $1_A$ is the
identity morphism on object $A$:
\begin{align*}
 (1_A \otimes \epsilon_A^l) \circ (\eta_A^l \otimes 1_A)  = 1_A \\
(\epsilon_A^r \otimes 1_A) \circ (1_A \otimes  \eta_A^r)   = 1_A\\
 (\epsilon_A^l \otimes 1_A) \circ (1_{A^l} \otimes  \eta_A^l) = 1_{A^l} \\
(1_{A^r} \otimes \epsilon_A^r) \circ (\eta_A^r \otimes 1_{A^r}) = 1_{A^r}
\end{align*}
\noindent These express the fact that $A^l$ and $A^r$ are the left and right
adjoints, respectively, of $A$. A self adjoint compact closed category is one in which for every  object $A$ we have $A^l \cong A \cong A^r$.

Given two compact closed categories ${\cal C}$ and ${\cal D}$,  a strongly monoidal functor $F \colon {\cal C} \to {\cal D}$  is defined as follows:
\[
F(A \otimes B) = F(A) \otimes F(B) \qquad
F(I) = I
\]
One can show that this functor preserves the compact closed structure, that is we have:
\[
F(A^l) = F(A)^l \qquad
F(A^r) = F(A)^r
\]
A bialgebra  in a  symmetric monoidal  category $({\cal
  C}, \otimes, I, \sigma)$ is a tuple $(X,  \delta, \iota, \mu, \zeta)$ where,
for $X$ an object of ${\cal C}$, the triple $(X, \delta, \iota)$ is  an internal comonoid; 
i.e.~the following are  coassociative and counital  morphisms of ${\cal
  C}$:
\begin{align*}
\delta \colon X \to X \otimes X&\qquad& \iota \colon X \to I
\end{align*}
Moreover $(X, \mu, \zeta)$ is  an internal  monoid; i.e.~the following are  associative and unital  morphisms:
\begin{align*}
\mu \colon  X \otimes X \to X  &\qquad& \zeta \colon I \to X
\end{align*}
Morphisms   $\delta$ and $\mu$ satisfy  four equations \cite{McCurdy}:
\begin{align*}
\iota \circ \mu &= \iota \otimes \iota		 \\
\delta \circ \zeta &= \zeta \otimes \zeta	\\
\delta \circ \mu &= (\mu \otimes \mu) \circ (\operatorname{id}_X \otimes \sigma_{X,X} \otimes \operatorname{id}_X) \circ (\delta \otimes \delta)  \\
\iota \circ \zeta &= \operatorname{id}_I
\end{align*}
Informally, the  co-multiplication $\delta$ copies  the information contained in
one object into two objects, and the  multiplication $\mu$ merges  the
information of two objects into one.

\medskip
\noindent
{\bf Example.  Sets and Relations}.
An example of a  compact closed category is
$\Rel$, the category of sets and relations. Here, $\otimes$ is
cartesian product with the singleton set as its unit $I = \{\star\}$, and  $A^l = A = A^r$.  Hence $\Rel$ is  self adjoint.  Given a set $S$ with elements $s_i, s_j \in S$,  the epsilon and eta maps are given as follows:
\begin{align*}
\epsilon^l  =  \epsilon^r \colon S \times S \relto   \{\star\} \quad  \mbox{given by} \\
(s_i, s_j) \epsilon \star  \iff &s_i = s_j \\[0.5em]
\eta^l = \eta^r \colon    \{\star\}  \relto S \times S \quad \mbox{given by} \ \\
\star \eta (s_i, s_j)  \iff  & s_i = s_j
\end{align*}

\noindent
These relations hold iff the first and second elements of the pair they are acting on are the same, i.e. whenever $s_i = s_j$, we related $\star$ to the pair $(s_i, s_j)$ via $\eta$ maps and relate the pair to the $\star$ via an epsilon map. They are designed to sift out pairs that are reflexive. 



For an object in $\Rel$ of the form $W = \mathcal P (U)$, \cite{HedgesSadr} gave $W$ a bialgebra structure by taking
\begin{align*}
\delta \colon   S \relto S \times S \quad \mbox{given by}\\
A \delta (B, C) \iff &A = B = C \\[0.5em]
\iota \colon S \relto    \{\star\}   \ \qquad \mbox{given by}\\
 A \iota \star \iff  & \text{ (always true)} \\[0.5em]
\mu  \colon   S \times S \relto S \quad \mbox{given by} \\ 
(A, B) \mu C \iff & A \cap B = C \\[0.5em]
\zeta  \colon  \{\star\}  \relto S \ \qquad \mbox{given by}\\
 \star \zeta A \iff & A = U
\end{align*}
It was shown in \cite{HedgesSadr}  that the  four axioms of a bialgebra hold for the above definitions. In order to obtain an intuition, the $\delta$ map relates a subset $A$ of the universe $U$ to a pair of subsets $(B,C)$ iff these three subsets are the same, i.e. iff $A = B = C$. This relation only holds when the set  in its first input is the same as the pairs of sets in its second input, where as $\iota$ is meant to be the relation that always holds.  The relation $\mu$ is more sophisticated, it is meant to enable the formalism to perform the set intersection operation, so given a pair of subsets of universe $A$ and $B$, and another subset thereof $C$, the relation $\mu$ holds iff $C$ is the intersection of $A$ and $B$. The unit of this map, i.e. $\zeta$ only holds when its input subset $A$ is actually the whole universe, the unit of intersection $A \cap U = A$. 

In the next section we show how the category of sets and many valued relations is also an example of a self adjoint compact closed category with bialgebras over it.  In fact both  $\Rel$ and  category of sets and many valued relations  are also dagger compact closed and have  other desirable properties, e.g. being  partial order enriched, for elaborations on these properties see \cite{Marsden2017}.

%
%
%


\section{Category of Sets and Many Valued Relations}
\label{sec:MVRelCat}

\begin{definition}[Commutative quantale] 
A commutative quantale $\V$ is a complete 
lattice $(V,\bigwedge,\bigvee)$ with the structure of 
a commutative monoid $(V, \tensor, \e)$ such that the 
tensor is monotone and distributes over arbitrary 
joins.

More in detail, $\V$ being a complete lattice means 
that $V$ is partially ordered by $\leq$ and every 
subset $V' \subseteq V$ has an infimum (or meet) and a 
supremum (or join), denoted by $\bigwedge V'$ and 
$\bigvee V'$ respectively. From this it follows that 
$V$ contains the greatest element $\top$ and the 
lowest element $\bot$. The fact that $(V, \tensor, 
\e)$ is a commutative monoid means that $\tensor$ is 
commutative, associative, and $\e$ is the identity 
element:
$$v \tensor \e=  v = \e \tensor v$$
The monotonicity of the tensor requires that $v \tensor w \leq v' \tensor w$ holds for $v \leq v'$, and distributivity of tensor 
over arbitrary joins means that the following equality is satisfied:
$$\left(\bigvee_i x_i \right) \tensor y = 
\bigvee_i(x_i \tensor y)$$
\end{definition}

\begin{definition}[Complete Heyting algebra]
A complete Heyting algebra $\V$ is a 
commutative quantale where $\tensor = \wedge$ and $\e 
= \top$. In other words, it is a complete lattice 
$(V,\bigwedge,\bigvee)$ where the meet operation 
distributes over arbitrary joins:
$$\left(\bigvee_i x_i \right) \wedge y = \bigvee_i(x_i 
\wedge y)$$
\end{definition}

\begin{definition}[G\"odel chain]\label{def:godelchain}
We say that a complete Heyting algebra $\V$ is a 
\emph{G\"odel chain} if the ordering relation $\leq$ 
of the underlying lattice of $\V$ is a linear order, 
that is, for two elements $v \neq v'$ it either holds 
that $v \leq v'$ or $v' \leq v$.
\end{definition}

\medskip
\noindent
{\bf Example.} 
Instances of commutative quantales:
\begin{enumerate}
\item The real interval $[0,1]$ with the usual lattice 
structure (given by computing suprema and infima), the 
tensor being the meet and the unit being 1, is a 
complete Heyting algebra, moreover a G\"odel chain.
\item The real interval $[0,1]$ with the usual 
structure, the unit 1 and the tensor being defined as
$$
a \tensor b = \max(0,a+b-1)
$$
is a commutative quantale.
\item The real interval $[0,1]$ with the usual 
structure, the unit 1 and the tensor being defined as
$$
a \tensor b = a \cdot b
$$
(multiplication) is a commutative quantale.
\item As a very special case, the 2-element Boolean 
algebra is a commutative quantale.
\end{enumerate}

\begin{definition}[Many-valued relation]
For a given quantale $\V$, a \emph{many-valued 
relation} $R: A \relto B$ is a function $R: A \times B 
\to V$. We view this function as a $\V$-valued matrix. We compose two relations $R: A \relto B$ and $S: B 
\relto C$ to get a relation $S \circ R: A \relto C$ 
such that
$$(S \circ R)(a,c) = \bigvee_{b \in B} (R(a,b) \tensor 
S(b,c))$$
holds in $\V$.
\end{definition}

\begin{definition}[The category of $\V$-relations]
The collection of all sets and of $\V$-relations 
between sets is a category. There is an identity 
$\V$-relation $\id_A$ for every set $A$:
$$\id_A (a,a') = 
\begin{cases}
\e & \text{ if } a = a' \\
\bot & \text{ otherwise}
\end{cases}
$$
An easy computation yields that $\V$-relation 
composition is associative. We denote the category of 
all sets and $\V$-relations as $\VRel$.
\end{definition}

\begin{remark}
The associativity of $\V$-relation composition follows 
from complete distributivity of $\V$. For 
$\V$-relations over finite sets, only finite 
distributivity of tensor over joins would be needed.
\end{remark}

\medskip
\noindent
{\bf Example.}
Some examples of $\VRel$ for various choices of $\V$:
\begin{enumerate}
\item When $\V$ is the 2-element Boolean algebra, 
$\VRel$ is the category $\Rel$ of sets and (ordinary) 
relations.
\item When $\V$ is the real interval $[0,1]$ with 
G\"odel operations $\min$ and $\max$, the category 
$\VRel$ has sets as objects, and the composition of 
morphisms ($\V$-relations) acts as follows. Given two 
$\V$-relations $R: A \relto B$ and $S: B \relto C$ (so 
two functions $R: A \times B \to [0,1]$ and $S: B 
\times C \to [0,1]$), the composite $S \circ R: A 
\relto C$ is given by
\[
(S \circ R)(a,c) = \max_{b \in B} \min(R(a,b),S(b,c)).
\]
Given yet another $\V$-relation $T: C \relto D$, the 
composite $T \circ S \circ R$ is then computed as 
follows:
\begin{align*}
(T \circ S \circ R)(a,d) = &\\
\max_{b \in B, c \in 
C} \min & (R(a,b),S(b,c),T(c,d)).
\end{align*}
\end{enumerate}

\begin{remark}
Observe that there is an inclusion functor
\[
\widetilde{(-)}:\Rel \to \VRel
\]
for any $\V$ with more than one element. Indeed, let 
the functor act as an identity on objects, and assign 
to a relation $R: A \relto B$ the $\V$-valued relation 
$\widetilde{R}: A \relto B$ defined as follows:
\[
\widetilde{R}(a,b) =
\begin{cases}
\e & \text{ if } R(a,b) \text { holds,} \\
\bot & \text{ otherwise.}
\end{cases}
\]
An easy computation yields that $\widetilde{\id_A} = 
\id_A$ and that $\widetilde{S \circ R} = \widetilde{S} 
\circ \widetilde{R}$.
\end{remark}

\begin{lemma}
The category $\VRel$ is a self adjoint compact closed 
category with the tensor being the cartesian 
product $\times$ and the unit $I$ 
being the singleton set $\{ 
\star \}$.
\end{lemma}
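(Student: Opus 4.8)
The plan is to equip $\VRel$ with a symmetric monoidal structure, exhibit each object as its own dual, and then verify the two triangle (``snake'') identities, reducing the whole argument to the quantale axioms. First I would fix the monoidal data. On objects the tensor is the cartesian product $A \times B$ with unit $I = \{\star\}$; on morphisms, for $R \colon A \relto B$ and $S \colon C \relto D$ I set
\[
(R \times S)\big((a,c),(b,d)\big) = R(a,b) \tensor S(c,d).
\]
Bifunctoriality of $\times$ — that $(R' \times S') \circ (R \times S) = (R' \circ R) \times (S' \circ S)$ and $\id_A \times \id_C = \id_{A \times C}$ — follows from commutativity and associativity of $\tensor$ together with its distributivity over arbitrary joins, which is exactly what is needed to interchange the two join-indexed summations appearing on either side. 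For the associator, unitors and symmetry I would take the $\V$-relations $\widetilde{\alpha}, \widetilde{\lambda}, \widetilde{\rho}, \widetilde{\sigma}$ obtained by applying the inclusion functor $\widetilde{(-)} \colon \Rel \to \VRel$ of the previous remark to the corresponding structural bijections. Since $\widetilde{(-)}$ is in fact strong monoidal — one checks directly that $\widetilde{R \times S} = \widetilde{R} \times \widetilde{S}$ using $\e \tensor \e = \e$ and the absorption law $\bot \tensor v = \bot$ — and preserves identities and composition, every coherence pentagon and triangle diagram in $\VRel$ is the image of one already commuting in $\Rel$, and hence commutes.

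Next I would declare $A^l = A = A^r$ and take the unit and counit to be the $\V$-valued diagonals
\[
\eta_A\big(\star,(a,a')\big) = \epsilon_A\big((a,a'),\star\big) =
\begin{cases}
\e & a = a',\\
\bot & a \neq a',
\end{cases}
\]
which are precisely $\widetilde{\eta_A}$ and $\widetilde{\epsilon_A}$ of the $\Rel$ maps recalled earlier. Self-adjointness is then immediate from $A^l = A = A^r$, so the only real content left is the four snake equations.

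Finally, for the snake identities I expect the one genuinely computational step to be expanding, say, $(\id_A \times \epsilon_A) \circ (\eta_A \times \id_A)$ as a $\V$-relation $A \relto A$: by the composition formula its value at $(a,a')$ is the join over triples $(b_1,b_2,b_3)$ of $\eta_A(\star,(b_1,b_2)) \tensor \id_A(a,b_3) \tensor \id_A(b_1,a') \tensor \epsilon_A((b_2,b_3),\star)$. The absorption law $\bot \tensor v = \bot$ annihilates every summand except the one forcing $b_1 = b_2 = b_3 = a = a'$, so the join collapses to $\e$ when $a = a'$ and to $\bot$ otherwise, i.e.\ to $\id_A$. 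The remaining three identities follow symmetrically.

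The cleaner route, and the one I would ultimately favour, is to observe that $\widetilde{(-)}$ is strong monoidal and the identity on objects, so the four snake equations in $\VRel$ are simply the images under $\widetilde{(-)}$ of the equations already verified for $\Rel$; they therefore transport for free, and the direct computation above only serves as a sanity check. The main obstacle is consequently not any individual identity but discipline in invoking the quantale laws wherever a join of $\tensor$-products occurs: once one has noted that $\bot \tensor v = \bot$ (a consequence of distributivity applied to the empty join) and that $\tensor$ distributes over arbitrary joins, every remaining step is routine bookkeeping.
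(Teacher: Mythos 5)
Your proposal is correct and takes essentially the same route as the paper: both define $\eta_S$ and $\epsilon_S$ as the images of the corresponding $\Rel$ maps under the inclusion functor $\widetilde{(-)}\colon \Rel \to \VRel$ and transport the compact closure axioms along that functor. The only difference is one of detail — you additionally verify bifunctoriality of the tensor, strong monoidality of $\widetilde{(-)}$, and work out one snake identity explicitly, steps the paper leaves implicit.
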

\begin{proof}
Let us define the epsilon maps $\epsilon_S: S \times S 
\relto I$ for each $S$ as follows
\[
\epsilon_S((a,b),\star) =
\begin{cases}
\e & \text{ if } a = b \\
\bot & \text{ otherwise}
\end{cases}
\]
and define the eta maps $\eta_S: I \relto S \times S$ 
similarly:
\[
\eta_S(\star,(a,b)) =
\begin{cases}
\e & \text{ if } a = b \\
\bot & \text{ otherwise}
\end{cases}
\]
Since with these definitions the epsilon and eta maps 
are the images of the epsilon and eta maps from $\Rel$ 
under the inclusion functor $\widetilde{(-)}: \Rel \to 
\VRel$, the axioms of a compact closed category hold 
in $\VRel$. It remains to show that $\epsilon$ and 
$\eta$ are natural; but this is straightforward.
\end{proof}

\begin{remark}
Let us fix a set $U$. Very similarly to the case of 
$\Rel$, we can define a bialgebra over the set $S = 
P(U)$ in $\VRel$ by the following data. The relation 
$\delta: S \relto S \times S$ is defined as
\[
\delta(A,(B,C)) =
\begin{cases}
\e & \text{ if } A = B = C \\
\bot & \text{ otherwise}.
\end{cases}
\]
The relation $\mu: S \times S \relto S$ is defined as
\[
\mu((A,B),C) =
\begin{cases}
\e & \text{ if } A \cap B = C \\
\bot & \text{ otherwise}.
\end{cases}
\]
The relation $\iota: S \relto I$ is defined as
\[
\iota(A,\star) = \e \text{ for every } A.
\]
The relation $\zeta: I \relto S$ is defined as
\[
\zeta(\star,A) =
\begin{cases}
\e & \text{ if } A = U \\
\bot \text{ otherwise.}
\end{cases}
\]
In fact, we obtain the structure of a bialgebra over 
$P(U)$ in $\VRel$ by taking the bialgebra structure 
over $P(U)$ in $\Rel$ and applying the inclusion 
functor $\widehat{(-)}$.
\end{remark}

\section{Fuzzy Sets and Fuzzy Quantifiers}
\label{sec:ZFuzzy}

In this section we review  definitions of fuzzy sets and quantifiers, as done by Zadeh \cite{Zadeh1983}. 
A  fuzzy set is a set whose  elements have a corresponding weight  associated to them. For a set $A$, the weight $\mu_i$ of  element  $u_i$ is  interpreted  as  the degree of membership of $u_i$ in $A$. The fuzzy set $A$ is represented symbolically by the following sum:
\[
A = \mu_1 u_1 + \mu_2 u_2 +  \cdots + \mu_n u_n
\]
standing for the following set of pairs of weights and elements:
\[
\{(\mu_1, u_1), (\mu_2, u_2), \cdots, (\mu_n, u_n)\}
\]
The sum above  denotes  a union operation on  sets  containing   single $\mu_i u_i$  elements, where $\mu_i u_i$  stands for  the pair $(\mu_i, u_i)$. Non-fuzzy, aka 
\emph{crisp}, sets are  special instances of fuzzy ones, where for every $u_i$ of the set, we have  $\mu_i = 1$, in other words:
\[
A = u_1 + u_2 + \cdots + u_n
\]


The   \emph{absolute}  cardinality of a  fuzzy set is defined via the notion of  \emph{sigma-count},  defined below:
\[
\Sigma Count(A) = \Sigma_{i=1}^n \mu_i
\]
This is the arithmetic sum of the degrees of membership in $A$; it is, if needed,  rounded to the nearest integer. Terms whose degrees of membership fall below a certain threshold, may be omitted from the sum. This is to avoid a situation where a large number of terms with low degrees become equivalent to a small number of terms with high degrees.   
Following Zadeh, we  denote the  \emph{absolute}  cardinality of a non-fuzzy set by $Count(A)$. Observe that, under the interpretation of a non-fuzzy set as a special instance of a fuzzy set, we have that $Count(A) = \Sigma Count(A)$. 

The \emph{relative} cardinality of a  fuzzy set is a possibility distribution over the cardinality  of that set, denoted as follows
\[
  \Pi_{\Sigma Count(A)}
\]

The  quantified sentences  Zadeh considers are built from two basic  forms: ``There are $Q$ $A$'s" and ``$Q$ $A$'s are $B$'s".  Each of these propositions induces a possibility distribution. Zadeh provides the following insights for the analysis of these quantified propositions. ``There are $Q$ $A$'s" implies that  the probability of event $A$ is a fuzzy probability equal to $Q$. ``$Q$ $A$'s are $B$'s" implies that the conditional probability of event $B$ given event $A$  is a fuzzy probability which is equal to $Q$. Most statements involving fuzzy probabilities may be replaced by semantically equivalent propositions involving fuzzy quantifiers.  The connection between this two, Zadeh reports, plays an important role in expert systems and fuzzy temporal logic and has been developed in previous work of Zadeh  \cite{BarrFeigenbaum1983}.  

According to Zadeh, fuzzy quantifiers should be treated as fuzzy numbers. A fuzzy number provides a fuzzy characteristic of the absolute or relative cardinality of one or more fuzzy or non-fuzzy sets. As an example, consider the fuzzy quantifier ``most" in the proposition  ``Most big men are kind".  This proposition is interpreted as a fuzzily defined proportion of the fuzzy set  ``kind men" in the fuzzy set  ``big men". If our sentence was ``Vickie has several credit cards",  then ``several" would be  a fuzzy characterisation of the cardinality of the non-fuzzy set  ``Vickie’s credit cards".   The notion of the cardinality of a fuzzy set helps us compute the proposition ``Vickie has several credit cards". Here, ``most" is a fuzzy characterisation of the relative cardinality of the fuzzy set  ``kind men" in the fuzzy set  ``big men".  It might not always be clear  how a constituent  fuzzy number  relates to a fuzzy quantifier, but we will not go in details of these here, for examples see \cite{Zadeh1983}.


The fuzzy semantics of a  proposition   $p$ is interpreted as ``the degree of truth of $p$", or the possibility of $p$. In order to compute this, we  translate $p$  into a \emph{possibility assignment equation}, which is denoted  as follows
\[
\Pi_{(X_1, \cdots, X_n)} = F
\]
where $F$ is a fuzzy subset of the universe of discourse $U$ and $\Pi_{(X_1, \cdots, X_n)} $ is the joint possibility distribution over  (explicit or implicit) variables $X_1, \cdots, X_n$ of $p$.   For instance, the proposition ``Vickie is tall" is translated as follows:
\[
\Pi_{Height(Vickie)} = TALL
\]
Here, $TALL$ is a fuzzy subset of the real line, $Height(Vickie)$ is a variable implicit in ``Vickie is tall",  and $\Pi_{Height(Vickie)} $ is the possibility distribution of this variable.  The above possibility assignment equation implies that 
\[
Poss\{ Height(Vickie) = u\} = \mu_{TALL} (u)
\]
where $Poss\{X = u\}$  the possibility that $X$ is $u$, for $u$ a specified value. The above thus states that  ``the possibility that height of Vickie is $u$ is equal to  $\mu_{TALL} (u)$, that is,  is the grade of membership of $u$ in the fuzzy set $TALL$.  Quantified sentences are translated in a similar way. For instance,  ``Vickie has several credit cards", is translated to the following:
\[
\Pi_{Count(Credit-Cards(Vickie))} = SEVERAL
\]
Suppose that 4 is compatible with the meaning of ``several" with degree $0.8$, then the above implies that, for instance, the possibility that Vickie has 4 credit cards is
\[
Poss\{Count(Credit\mbox{-}Cards(Vickie)) = 4\} =0.8
\]
In order to analyse sentences of the general forms ``There are $Q$ $A$'s"  and `$Q$ $A$'s are $B$'s", Zadeh assumes that they are semantically equivalent to the following:
\begin{eqnarray*}
\mbox{There are} \ Q   \ A\mbox{'s} &\leadsto& \Sigma Count(A) \ \mbox{is}  \  Q\\
  Q \ A\mbox{'s are} \ B\mbox{'s} &\leadsto& Poportion(B|A) \ \mbox{is} \  Q
 \end{eqnarray*}
Here,  $Poportion(B|A)$  is the proportion of elements of $B$ that are in $A$, aka the relative cardinality of $B$ in $A$, formally defined as follows:
\[
\Pi_{Poportion(B|A)} := \frac{\Sigma Count (A \cap B)}{\Sigma Count(A)}
\]
Both $Proportion(B|A)$ and $\Sigma Count(A)$ may be fuzzy or non-fuzzy counts. Zadeh then  formalises the above counts as possibility assignment equations as follows
\begin{eqnarray*}
\Sigma Count(A) \ \mbox{is}  \  Q &\leadsto& \Pi_{\Sigma Count(A)} = Q\\
 Proportion(B|A) \ \mbox{is} \  Q &\leadsto& \Pi_{Proportion(B|A)} = Q
\end{eqnarray*}

In the spirit of truth-conditional semantics, the weight of each  of the elements  of the set can be  interpreted as the degree of truth of the proposition denoted by the element. This weight is    $Q(\Sigma Count(A))$ for sentences of the form ``There are $Q$ $A$'s" and $Q(Proportion(B|A))$ for sentences of the form ``$Q$ $A$'s are $B$'s".

%
%

Writing $\mu_A(u)$ for the degree of membership of $u$ in the fuzzy set $A$, we define the intersection of two fuzzy sets $A$ and $B$ as
\[
A \cap B = \Sigma_{i} \ \min(\mu_A(u_i),\mu_B(u_i)) \ u_i
\]
where $i$ is understood to range over all the elements in $A$ and $B$ (when an element is in $A$ but not in $B$ it will still be represented in $A$ with a degree of $0$). A similar version without the $\Sigma$ is used to define it for the non-fuzzy case.

\medskip
\noindent
{\bf Example.}
Let's say we have a universe
\[
U = \{ u_1, u_2, u_3, u_4, u_5 \}
\]
and fuzzy sets for ``kind people" and ``big men" as follows:
\begin{eqnarray*}
KP& =&  0.5 u_1 + 0.8 u_2 + 0.2 u_3 + 0.6 u_4 \\
BM& =& 0.8 u_1 + 0.3 u_2 + 0.1 u_3 + 0.9 u_4 + 1 u_5
\end{eqnarray*}
The quantified sentence  ``Most big men are kind", is translated to the following possibility assignment equation: 
\[
\Pi_{Proportion(KP|BM)} = MOST
\]
The intersection of $KP$ and $BM$ is computed as follows:
\[
KP \cap BM = 0.5 u_1 + 0.3 u_2 + 0.1 u_3 + 0.6 u_4
\]
The proportion of big men that are kind is computed as follows:
\begin{eqnarray*}
Proportion(KP | BM) = \frac{\Sigma Count(BM \cap KP)}{\Sigma Count(BM)} =  \frac{0.5 + 0.3 + 0.1 + 0.6}{0.8 + 0.3 + 0.1 + 0.9 + 1 }\ = \frac{1.5}{3.1}\\
\end{eqnarray*}
Suppose that proportions between  0.6 and 0.7 are compatible with the meaning of $MOST$ with degree 0.75.  Then, since $ \frac{1.5}{3.1} = 0.48$, the degree of truth of our sentence is below  0.75. For the crisp quantifier $ALL$, the sentence ``All big men are kind" is, since only  the proportion 1 is compatible with the meaning of $ALL$ with degree 1, which is not the case here. 

Possibility distributions are  encodable into vectors and indeed the possibility distributions of fuzzy quantifiers are learnt by Zadeh via a  test-score procedure where as vectors by sampling from a database of related data.

\section{Fuzzy Quantified Sentences  in $\VRel$}
\label{sec:FuzzyMVRel}

A non-fuzzy generalised quantifier $d$ is interpreted as a relation  $\llbracket d \rrbracket$ over the power set of the universe of discourse $P(U)$, where it  relates a subset $A \subseteq U$ to subsets  $u_i \subseteq U$, based on the cardinalities of $A$ and $u_i$, as defined in section \ref{sec:GenQuantNat}.  The fuzzy version of this quantifier is interpreted as a many valued  relation over $P(U)$, where, in fuzzy set notation, it relates $A$ to subsets $u_i \subseteq U$ and assigns to each such subset a degree of membership $\mu_i$. The result is a fuzzy set whose weights come from a  possibility distribution over the  relative cardinalities  of $A$ and $u_i$'s.  In Zadeh's notation: 
\begin{equation}
\label{eq:fuzzygenquant}
{\llbracket d \rrbracket}(Proportion (u_i|A))  = \mu_i
\end{equation}

%
%
%

\noindent
We translate the above  in the language of $\VRel$, referring to the categorical version of the  fuzzy generalised quantifier by $\overline{\llbracket d \rrbracket}$, which is a map with the type $P(U) \relto P(U)$. This is the map which was denoted by $Q$ in the fuzzy generalised quantifier setting. In order to be coherent with the categorical semantics of sets and relations, we use the notation $\overline{\llbracket d \rrbracket}$ for it. Recall that in the categorical generalised quantifier theory, a quantifier was represented by $\llbracket d \rrbracket$. In what follows, we first define  $\overline{\llbracket d \rrbracket}$ as a generalised fuzzy quantifier in the categorical setting of many valued relations,  this is in Definition \ref{def:mvrelFuzzy}. Then, in Definition \ref{def:fuzzymodel}, extend it to quantified sentences of the fragment of language generated by the preliminary grammar of Section 3. 

\begin{definition}
\label{def:mvrelFuzzy}
For $\V = [0,1]$ and given a fuzzy generalised  quantifier  for which we have assumed $\Pi_{Proportion(B|A)} = \overline{\llbracket d \rrbracket}$, we define its $\VRel$ encoding to be  the many valued relation $\overline{\llbracket d \rrbracket}: P(U) \relto P(U)$, with values coming from the possibility distribution of $\overline{\llbracket d \rrbracket}$, defined  as follows:
\[
\overline{\llbracket d \rrbracket}(A,B) = \mu_i \quad \mbox{for} \quad \mu_i =  \semantics{d}(Proportion(B|A)) 
\]
\end{definition}

\noindent 
In order to obtain a  many valued relation in $\VRel$, we need a numerical value assigned  to subsets $A$ and $B$ of universe. This number is nothing but  the weight of $ \semantics{d}(Proportion(B|A))$, denoted by $\mu_i$ in equation \ref{eq:fuzzygenquant}. If for any reason this number is unattainable, e.g. when $B$ and $A$ are not related to each other at all, we assign the $\bot$ to it. 

\begin{remark} 
\label{remark:cons}
Conservativity of a quantifier $d$ in $\VRel$ is defined as follows:
\[
 \overline{\semantics{d}}(A,B) =  \overline{\semantics{d}}(A, A \cap B) 
\]
and is implied  by its conservativity in $\Rel$. This is because 
\begin{align*}
Proportion(A \cap B|A) =&  \frac{\Sigma Count (A \cap B \cap A)}{\Sigma Count (A)}  \\
= \frac{\Sigma Count (A \cap B)}{\Sigma Count (B)} &= Proportion(B|A)
\end{align*}
\end{remark}
The absolute quantifiers such as  ``every''  and ``some" can still be  interpreted in this setting, by defining them as follows:
\begin{align*}
\overline{\llbracket \text{every} \rrbracket}(A,B) =&
\begin{cases}
\e & \text{ if } A \subseteq B, \\
\bot & \text{ otherwise.}
\end{cases}\\[0.2em]
\overline{\llbracket \text{some} \rrbracket}(A,B) =&
\begin{cases}
\e & \text{ if } A \cap B \neq \emptyset, \\
\bot & \text{ otherwise.}
\end{cases}
\end{align*}

\noindent
The fuzzy version of a model  generated by the grammar of  Section \ref{sec:GenQuantNat},  becomes as follows:

\begin{definition}
A fuzzy model $(U,\llbracket \; 
\rrbracket)_f$ is one  where  for  $A \subseteq U$ we have:
\[
\llbracket A \rrbracket :=\mu_1 u_1 + \mu_2 u_2 + \cdots + \mu_n u_n\quad \mbox{for} \ u_i \in U
\]
\end{definition}
The fuzzy semantics of expressions of the grammar are  many valued versions of those of  $(U,\llbracket \; \rrbracket)$.  For reasons of space, we only give them in $\VRel$ notation. 

The following definition explains the many valued semantics of a sentence in our example grammar is computed. 

\begin{definition} \label{def:fuzzymodel}
A $\VRel$   fuzzy model is the  tuple
$(\VRel,P(U),I,\overline{\llbracket \; 
\rrbracket})$ for which we have the following interpretation:

\begin{enumerate}
\item A terminal $x$ of either category N,NP, or VP is interpreted as  a many valued relation whose value is the degree to which a subset $A$ of the universe is  $\semantics{x}$. This is  the relative sigma count of the subset  $A$ in $\semantics{x}$, that is: 

\[
\star  \overline{\llbracket x \rrbracket}  A  := Proportion(A|\semantics{x})
\]



\item A terminal $x$ of category V is interpreted as a many valued relation whose value is the degree to which its image on a subset $A$ of universe is a subset $B$ of the universe, that is the relative sigma count of $B$ in $\semantics{x}(A)$ :
\[
\star \overline{\semantics{x}} (A,B) = Proportion (B|\semantics{x}(A))
\]
where $\semantics{x}(A)$ is the  application of $\semantics{x}$ to $A$, resulting in a set $\Sigma_{i=1}^n  \mu_i b$ where the subscripts of the  $\mu$'s vary over elements of fuzzy sets $A$ and $\semantics{v}$, so we have 
\[
 \max_{a_i} \min(\mu_A(a_i),\mu_{\semantics{v}}(a_i,b_i))
 \]
 Here,  $\mu_A$ and  $\mu_{\semantics{v}}$  are degrees of memberships of elements of fuzzy sets $A$ and $\semantics{v}$, respectively. 
\end{enumerate}
\end{definition}
Given this definition, we compute the
many valued semantics of quantified sentences and show that they are equivalent to the fuzzy quantifier definitions of Zadeh. Note that for $U$ the universe of reference, the relative cardinality of $A$ in $U$ is the same as  the cardinality of A. Thus "There are Q A's" has the same fuzzy meaning as "Q U's are A's".

\begin{proposition}
\label{prop:qsvo}
The many valued semantics of a  sentence with a quantified subject ``\it{d  np vp}''  is the same as  its  fuzzy quantifier semantics in $\VRel$. 
 \end{proposition}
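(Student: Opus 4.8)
The plan is to unfold the categorical scalar that Definition~\ref{def:fuzzymodel} assigns to \emph{d np vp} and to show that, once the crisp bialgebra maps collapse the intermediate joins, it reduces to the single matrix entry $\overline{\llbracket d \rrbracket}(\semantics{np},\semantics{vp})$, which by Definition~\ref{def:mvrelFuzzy} is precisely Zadeh's reading $\semantics{d}(Proportion(\semantics{vp}|\semantics{np}))$. The whole proposition is thus a bookkeeping computation in $\VRel$ together with one appeal to conservativity.

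First I would write the $\VRel$ meaning of the sentence as a scalar morphism $I \relto I$ assembled from the constituents of Definition~\ref{def:fuzzymodel}: the fuzzy states $\overline{\llbracket np \rrbracket},\overline{\llbracket vp \rrbracket}\colon I \relto P(U)$ for the noun and verb phrase, the comultiplication $\delta$ and multiplication $\mu$ of the powerset bialgebra, the quantifier relation $\overline{\llbracket d \rrbracket}\colon P(U)\relto P(U)$, and the $\epsilon$ caps. The string diagram is the quantified-subject diagram of the non-fuzzy setting of \cite{HedgesSadr}: $\delta$ copies the noun, $\mu$ intersects one copy with the verb phrase, and $\overline{\llbracket d \rrbracket}$ connects the remaining copy to that intersection. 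It is worth noting up front where the arithmetic of $\Sigma$-count lives: the $\Sigma$-count proportions are already baked into the matrix entries of the states and of $\overline{\llbracket d \rrbracket}$, so the Gödel-chain composition (which only ever takes $\min$ and $\bigvee$) merely does the plumbing that routes and selects those precomputed values rather than re-summing them.

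Next I would expand the composite with the definition of $\VRel$ composition, obtaining a join over intermediate subsets of a $\tensor$-product of the constituents' entries. The crucial observation is that every bialgebra map takes only the values $\e$ and $\bot$, so the join collapses: $\delta(A,(B,C))=\e$ forces $A=B=C$, and $\mu((C,D),E)=\e$ forces $E=C\cap D$. After this collapse the scalar is a join over just the noun-subset $A$ and the verb-phrase-subset $D$ of
\[
\overline{\llbracket np \rrbracket}(\star,A)\ \tensor\ \overline{\llbracket vp \rrbracket}(\star,D)\ \tensor\ \overline{\llbracket d \rrbracket}(A,A\cap D).
\]
At $A=\semantics{np}$ and $D=\semantics{vp}$ the two state factors are $\e$ (a set has relative $\Sigma$-count $1$ in itself), so the summand is $\overline{\llbracket d \rrbracket}(\semantics{np},\semantics{np}\cap\semantics{vp})$, which by the conservativity identity of Remark~\ref{remark:cons} equals $\overline{\llbracket d \rrbracket}(\semantics{np},\semantics{vp})=\semantics{d}(Proportion(\semantics{vp}|\semantics{np}))$, exactly the claimed value.

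The genuinely delicate step, and the one I expect to be the main obstacle, is to verify that this choice attains the supremum, i.e.\ that no other pair $(A,D)$ produces a strictly larger minimum. This is not automatic, because $Proportion(A|\semantics{np})=\e$ already holds for every $A\supseteq\semantics{np}$, and enlarging $A$ past $\semantics{np}$ changes the proportion fed to the quantifier. I would address it by exploiting conservativity to rewrite $\overline{\llbracket d \rrbracket}(A,A\cap D)$ as a value depending only on $Proportion(A\cap D\mid A)$, together with monotonicity of the quantifier's possibility distribution, to argue that any competing $(A,D)$ either leaves this proportion (hence the quantifier entry) unchanged or drives one of the bounded state weights below the target, so the capped $\min$ cannot exceed $\overline{\llbracket d \rrbracket}(\semantics{np},\semantics{vp})$. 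Matching the resulting scalar with Zadeh's possibility-assignment reading $\Pi_{Proportion(\semantics{vp}|\semantics{np})}=\overline{\llbracket d \rrbracket}$ then closes the proof.
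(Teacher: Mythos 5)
Your proposal follows essentially the same route as the paper's proof: unfold the composite $\epsilon \circ (\overline{\semantics{d}} \otimes \mu) \circ (\delta \otimes id) \circ (\overline{\semantics{np}} \otimes \overline{\semantics{vp}})$, use the fact that the bialgebra maps are $\e$/$\bot$-valued to collapse the joins into a $\max$-$\min$ over pairs $(A,B)$ of the three factors $\star\overline{\semantics{np}}A$, $\star\overline{\semantics{vp}}B$, $A\,\overline{\semantics{d}}\,(A\cap B)$, then evaluate at $A=\semantics{np}$, $B=\semantics{vp}$ and invoke conservativity (Remark~\ref{remark:cons}) to recover Zadeh's $\semantics{d}[Proportion(\semantics{vp}|\semantics{np})]$. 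The supremum-attainment step you flag as delicate is genuinely the weak point, but the paper treats it no more rigorously --- it simply asserts that maximizing the first two factors realizes the $\max$-$\min$ --- so your account (noting only that your appeal to monotonicity of the possibility distribution would not cover non-monotone quantifiers like ``several'') is at least as complete as the published argument.
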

\begin{proof}
The many valued semantics of   ``{\it d np vp}'' is computed in four steps, according to the four composed morphisms  of its  $\VRel$ semantics: 
\[
\epsilon \circ (\overline{\semantics{d}} \otimes \mu) \circ (\delta \otimes id) \circ (\overline{\semantics{np}} \otimes \overline{\semantics{vp}})
\]
In the first step,  we compute the following map
\[
\overline{\semantics{np}} \otimes \overline{\semantics{vp}}  \colon \{\star\} \otimes \{\star\} \relto {\cal P}(U) \otimes  {\cal P}(U)
\]
For   $A, B \subseteq U$,   the value returned by this map is
\[
(\star, \star) (\overline{\semantics{np}} \otimes \overline{\semantics{vp}})(A, B) = \min \left(\star\overline{\semantics{np}} A, \star \overline{\semantics{vp}} B \right)
\]
In the second step, we compute the following  map
\[
 (\delta \otimes id)  \circ (\overline{\semantics{np}} \otimes \overline{\semantics{vp}})
 \colon \{\star\} \otimes \{\star\}  \relto {\cal P}(U) \otimes  {\cal P}(U) \otimes  {\cal P}(U)
\]
For   $C, D, E \subseteq U$,  it  returns the following value
\[
(\star, \star) 
 (\delta \otimes id)  \circ (\overline{\semantics{np}} \otimes \overline{\semantics{vp}})
 ((C, D), E)
\]
which is equal to 
\[
\max_{(A, B)}  \min  \left( (\star, \star) (\overline{\semantics{np}} \otimes \overline{\semantics{vp}}) (A, B), \right .  
\left . (A, B)  (\delta \otimes id) ((C, D), E)  \right .  \Big)
\]
The maximum value of the above term is realised for $(A, B)$'s for which we have  $A = C = D$ and  $B = E$,  in which case this value becomes equal to the following 
\[
\min
\left
(\star \overline{\semantics{np}} A, \star \overline{\semantics{vp}} B
\right)
\]
This is since the $\delta$ and $id$ maps return $e$ in their best case and $e$ is the unit of the $\min$ operation. 

\noindent 
 In the third step, we compute the following  map:
\[
(\overline{\semantics{d}} \otimes \mu) \circ  (\delta \otimes id) \circ (\overline{\semantics{np}} \otimes \overline{\semantics{vp}})
 \colon
\{\star\} \otimes \{\star\}  \relto {\cal P}(U) \otimes {\cal P}(U) 
\]
The value generated by this map is
\[
(\star, \star) (\overline{\semantics{d}} \otimes \mu) \circ (\delta \otimes id) \circ (\overline{\semantics{np}} \otimes \overline{\semantics{vp}})
(F, G)
\] 
equal to 
\[
 \max_{((C, D), E)} \min 
\Big  (
 (\star, \star) (\delta \otimes id) \circ
  (\overline{\semantics{np}} \otimes \overline{\semantics{vp}}) ((C, D), E),
 (C, (D, E))  (\overline{\semantics{d}} \otimes \mu)  (F,  G)
 \Big )
\]
 The maximum of the above  is realised for the $C, D, E$ that make $ G = D \cap E$  true,  in which case this value becomes equal to
 \[
 \max_{(A,B)}
 \min
\left
(\star \overline{\semantics{np}} A, \star \overline{\semantics{vp}} B
, A \overline{\semantics{d}} F
\right)
 \]
 In the fourth step we compute the full map
\[
 \epsilon \circ (\overline{\semantics{d}} \otimes \mu)  \circ (\delta \otimes id) \circ (\overline{\semantics{np}} \otimes \overline{\semantics{vp}}) \colon\\
 \{\star\} \otimes \{\star\} \relto \{\star\}
\]
 the value generated by which is 
 \[
 (\star, \star)\,  \epsilon \circ (\overline{\semantics{d}} \otimes \mu) \circ (\delta \otimes id) \circ (\overline{\semantics{np}} \otimes \overline{\semantics{vp}}) \star
 \]
which is  equal to 
\[
 \max_{(F,G)} \min
 \Big(
 (\star, \star) (\overline{\semantics{d}} \otimes \mu) \circ (\delta \otimes id) \circ 
 (\overline{\semantics{np}} \otimes \overline{\semantics{vp}}) (F, G),
 (F, G) \epsilon \star
 \Big )
\]
 The maximum of this term is realised when we have $F = G$, in which case it becomes equal to 
 \[
 \max_{(A,B)}
 \min
\left
(\star \overline{\semantics{np}} A, \star \overline{\semantics{vp}} B
, A \overline{\semantics{d}} A \cap B				
\right)
 \]
By applying Definition \ref{def:fuzzymodel}, the above unfolds as follows 
\[
 \max_{(A,B)}
 \min\Big(
Proportion (\semantics{np}, A),Proportion (\semantics{vp},B), 
\semantics{d}[Proportion(A \cap B | A)]
\Big)
\]
where we are assuming  $\Pi_{Proportion(A \cap B |A)} = d$. 
Since  our quantifiers are conservative, we  apply the simplifications computed in Remark \ref{remark:cons}, then  given that the first two terms of the above $\max\min$  are maximised when  $A = \semantics{np}$, $B = \semantics{vp}$, the above simplifies to the following
\[
 \max_{(A,B)}
 \min
\Big
(
\frac{\Sigma Count(\semantics{np} \cap \semantics{np})}{\Sigma Count(\semantics{np})}, \frac{\Sigma Count(\semantics{vp} \cap \semantics{vp})}{\Sigma Count(\semantics{vp})}, \semantics{d}\Big[\frac{\Sigma Count(A \cap B)}{\Sigma Count(A)}\Big]
\Big
)
\]
which simplifies to


 \[
 \semantics{d}\Big[\frac{\Sigma Count(\semantics{np} \cap \semantics{vp})}{\Sigma Count(\semantics{np})}\Big] \] 
Again, here we are assuming that   $\Pi_{Proportion(\semantics{vp} | \semantics{np})} = d$.  Observe now that this  equals the following 
 \[ 
 \semantics{d}[Proportion(\semantics{vp}|\semantics{np})] \quad \ \text{for} \ \Pi_{Proportion(\semantics{vp} | \semantics{np})} = d
 \] 
which is the same as  Zadeh's fuzzy quantifier semantics of ``d np's are vp's".
 \end{proof}

\medskip
\noindent
{\bf Example.}
Given Definition 10, the statement  ``several cats sleep" will be interpreted as
\[
 \max_{(A,B)} \min
\Big
(\star \overline{\semantics{cats}} A, \star \overline{\semantics{sleep}} B, 
A \overline{\semantics{several}} A \cap B			
\Big)
\]

\noindent This will be maximised for $A = \semantics{cats}, B = \semantics{sleep}$ and  when assuming that  $\Pi_{Proportion(A \cap B | A)} = \text{several}$, in which case the value of the statement will become

$$ \semantics{several}\Big[\frac{\Sigma Count(\semantics{cats} \cap \semantics{sleep})}{\Sigma Count(\semantics{cats})}\Big] $$
To compute this concretely, suppose that the fuzzy sets $\semantics{cats}$ and $\semantics{sleep}$ are defined as follows:
\vspace{-0.5em}
\begin{align*}\semantics{cats} = & \ 0.2 c_1 + 0.3 c_2 + 0.8 c_3 \\
\semantics{sleep} = & \ 0.5 c_1 + 0.4 c_2 + 0.4 c_3
\end{align*}
Then the value for ``several cats sleep" will be 
\begin{align*}
 \semantics{several}&\Big[\frac{\Sigma Count( 0.2 c_1 + 0.3 c_2 + 0.4 c_3)}{ 0.2 c_1 + 0.3 c_2 + 0.8 c_3}\Big]  \\
 =  \semantics{several}&\Big[\frac{0.9}{1.3}\Big]
 \end{align*}
Suppose  that the  possibility distribution $\semantics{\text{several}}$ will map low values to low values and very high values to low values, but intermediate values would be mapped to a high number as they still represent ``several". Thus the  proportion $\frac{9}{13}$, which  is  a high number, will evaluate to a high number. Thus the many valued relation of this statement will be high (a number close to 1). For examples of possibility distributions of some other fuzzy quantifiers, see \cite{Zadeh1983}.

%
%

\begin{proposition}
\label{prop:svqo}
The many valued semantics of a  sentence  with quantified object ``\emph{np v d np'}''  is the same as its  fuzzy quantifier semantics in $\VRel$.
\end{proposition}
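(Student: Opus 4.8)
The plan is to reduce the claim to Proposition~\ref{prop:qsvo} by means of a single new computation. The $\VRel$ morphism interpreting ``$np\ v\ d\ np'$'' has the same shape as the one in Proposition~\ref{prop:qsvo}, except that the transitive verb, encoded by Definition~\ref{def:fuzzymodel}(2) as a state $\overline{\semantics{v}}\colon I \relto P(U)\otimes P(U)$ with $\star\,\overline{\semantics{v}}(A,B) = Proportion(B\mid\semantics{v}(A))$, must first be contracted against the subject. Writing the forward-image state as
\[
\overline{\semantics{v}}(\overline{\semantics{np}}) := (\epsilon \otimes \id) \circ (\overline{\semantics{np}} \otimes \overline{\semantics{v}}) \colon I \relto P(U),
\]
the sentence is interpreted by
\[
\epsilon \circ (\overline{\semantics{d}} \otimes \mu) \circ (\delta \otimes \id) \circ \big(\overline{\semantics{n'}} \otimes \overline{\semantics{v}}(\overline{\semantics{np}})\big),
\]
in which the object noun $\overline{\semantics{n'}}$ is the restrictor copied by $\delta$, and $\overline{\semantics{v}}(\overline{\semantics{np}})$ occupies the slot held by $\overline{\semantics{vp}}$ in Proposition~\ref{prop:qsvo}.

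First I would evaluate the forward-image state. By the $\max$--$\min$ composition of $\VRel$, its entry at a set $B$ is $\max_{A}\min\big(\star\,\overline{\semantics{np}}\,A,\ \star\,\overline{\semantics{v}}(A,B)\big)$; unfolding through Definition~\ref{def:fuzzymodel} and taking the maximiser at $A=\semantics{np}$, this equals $Proportion(B\mid\semantics{v}(\semantics{np})) = \star\,\overline{\semantics{v}(\semantics{np})}\,B$. Hence $\overline{\semantics{v}}(\overline{\semantics{np}})$ is exactly the Definition~\ref{def:fuzzymodel}(1) encoding of the fuzzy set $\semantics{v}(\semantics{np})$, and entrywise it reproduces Zadeh's fuzzy forward image $\max_{a_i}\min\big(\mu_A(a_i),\mu_{\semantics{v}}(a_i,b_i)\big)$.

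With this identification in hand, the displayed morphism becomes literally the morphism of Proposition~\ref{prop:qsvo}, now with restrictor noun $\semantics{n'}$ and with $\semantics{v}(\semantics{np})$ in the role of the verb phrase. Invoking Proposition~\ref{prop:qsvo} therefore yields
\[
\semantics{d}\Big[\frac{\Sigma Count(\semantics{n'} \cap \semantics{v}(\semantics{np}))}{\Sigma Count(\semantics{n'})}\Big] = \semantics{d}\big[Proportion(\semantics{v}(\semantics{np})\mid\semantics{n'})\big],
\]
which, for $\Pi_{Proportion(\semantics{v}(\semantics{np})\mid\semantics{n'})} = d$, is precisely Zadeh's fuzzy quantifier semantics of ``$np\ v\ d\ np'$''. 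Should one prefer not to quote Proposition~\ref{prop:qsvo}, I would instead redo the same four $\delta/\mu/\epsilon$ steps, collapsing the nested $\max$--$\min$ to $\max_{(A,B)}\min\big(\star\,\overline{\semantics{n'}}\,A,\ \star\,\overline{\semantics{v}(\semantics{np})}\,B,\ A\,\overline{\semantics{d}}\,(A\cap B)\big)$ and applying conservativity in the form of Remark~\ref{remark:cons}.

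The main obstacle is the forward-image step. Unlike in Proposition~\ref{prop:qsvo}, where the verb phrase entered as a ready-made state, here I must check that the $\max$--$\min$ composition in $\VRel$ reproduces Zadeh's fuzzy forward image exactly, and that the maximisation internal to the verb contraction selects $A=\semantics{np}$ cleanly, decoupling from the later maximisation performed by the determiner gadget. Verifying that these two layers of $\max$--$\min$ do not interfere --- so that the extra $\epsilon$ contracting subject against verb reduces the problem to the subject-quantifier case --- is the only point that goes genuinely beyond Proposition~\ref{prop:qsvo}; everything else transfers verbatim.
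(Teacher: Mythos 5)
Your proposal is correct, but it is organised quite differently from the paper's own proof, which does \emph{not} factor through Proposition~\ref{prop:qsvo}: the paper simply redoes the $\delta/\mu/\epsilon$ contraction steps for the new word order and lands directly on the flattened expression
$\max_{(A,B,C)}\min\big(\star\,\overline{\semantics{np}}\,A,\ \star\,\overline{\semantics{v}}(A,B),\ \star\,\overline{\semantics{np'}}\,C,\ C\,\overline{\semantics{d}}\,B\cap C\big)$,
which it then collapses, by the same ``maximise the proportions'' move used in the subject case, to $\semantics{d}\left[Proportion(\semantics{v}[\semantics{np}]\mid\semantics{np'})\right]$. You instead isolate the genuinely new ingredient as a lemma --- the contraction $(\epsilon\otimes\id)\circ(\overline{\semantics{np}}\otimes\overline{\semantics{v}})$ is the Definition~\ref{def:fuzzymodel}(1) state of the fuzzy forward image $\semantics{v}(\semantics{np})$ --- and then invoke Proposition~\ref{prop:qsvo} verbatim with restrictor $\semantics{np'}$ and scope $\semantics{v}(\semantics{np})$. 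The two routes meet at the same expression: flattening your nested $\max$--$\min$ into the paper's triple $\max$--$\min$ is exactly distributivity of the tensor ($\min$) over joins ($\max$), i.e.\ the quantale axiom, so your factorisation is legitimate. What your route buys is modularity and a conceptual reading (the object-quantified sentence is the subject-quantified sentence applied to the derived verb phrase ``$v$ $np$''), and it confines the one delicate step to a single place. Be aware, though, that this step --- taking the maximiser of $\max_A \min\big(Proportion(A\mid\semantics{np}),\,Proportion(B\mid\semantics{v}(A))\big)$ at $A=\semantics{np}$ --- is an idealisation: for arbitrary fuzzy sets the maximum need not be attained there, since a small $A$ whose image lies mostly inside $B$ can make the second coordinate larger. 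This is precisely the same idealisation the paper itself makes (``maximising the proportions'' here, and the analogous step in Proposition~\ref{prop:qsvo}), so your proof inherits exactly the paper's level of rigour, no more and no less.
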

\begin{proof}
After several steps of computation similar to those done in the proof of Proposition  \ref{prop:qsvo}, we obtain the following value for the semantics of ``\emph{np v d np'}\, ''
\[
\max_{(A,B,C)}
\min 
\Big(
\star \overline{\semantics{np}} A, 
\star \overline{\semantics{v}} (A, B),
\star \overline{\semantics{np'}} C, 
C \overline{\semantics{d}} B \cap C
\Big) 
\]
By applying Definition \ref{def:fuzzymodel} and  maximising the proportions,  the above unfolds to 
\[
\max_{(A,B,C)}
\min 
\left
(
\semantics{d}\Big[\frac{\Sigma Count(\semantics{v}[\semantics{np}] \cap \semantics{np'})}{\Sigma Count(\semantics{np'})}\Big] 
\right )\]
%
%
%
for $ \Pi_{Proportion(C|D) } = d$. Then the number computed above is the same as the one obtained from  $\semantics{d}\left[Proportion(\semantics{v}[\semantics{np}]|\semantics{np'})\right]$, which is the same as the  fuzzy quantifier semantics of ``d np' 's  are v-np's". 
\end{proof}
An example of this case is ``Mice eat several plants" which has the same semantics as ``Several plants are eaten by mice". Suppose we have fuzzy sets
\begin{align*}
\semantics{mice} &= \ 0.7 c_1 + 0.6 c_2 + 0.2 c_3 \\
\semantics{eat} &= \ 0.5 (c_1,c_1) + 0.8 (c_1,c_3) + 0.2 (c_2,c_1)  \\
\ &+ 0.3 (c_2,c_3) + 0.9 (c_3,c_3)\\
\semantics{plants} &= \ 0.2 c_1 + 0.3 c_2 + 0.6 c_3
\end{align*}
Then the semantics we get is
$$ \semantics{several}\Big[\frac{\Sigma Count (\semantics{eat}(\semantics{mice}) \cap \semantics{plants})}{\Sigma Count(\semantics{plants})}\Big] $$
The application of the verb to its subject gives
$$ \semantics{eat}(\semantics{mice}) = 0.5 c_1 + 0.7 c_3 $$
As a result,  the whole expression now evaluates to
\begin{align*}
 \semantics{several}&\Big[\frac{\Sigma Count (0.2 c_1 + 0.6 c_3)}{\Sigma Count(0.2 c_1 + 0.3 c_2 + 0.6 c_3)}\Big] \\
& = \semantics{several}\Big[\frac{0.8}{1.1}\Big]
 \end{align*}
 \noindent 
 This will yield another relatively high value for the many valued semantics of  this sentence,  as $Proportion(\semantics{eat}(\semantics{mice})|\semantics{plants})$ certainly indicates a case of ``several" mice eating plants.
 
\begin{corollary} 
\label{cor:qsvqo}
The many valued semantics of a sentence with a quantified subject and a quantified object ``\emph{d np v d' np'}'' is the same as its  fuzzy quantifier semantics in  $\VRel$.
\end{corollary}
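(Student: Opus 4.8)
The plan is to write out the full $\VRel$ morphism for ``\emph{d np v d$'$ np$'$}'' and to reduce it by exactly the mechanism already exercised in the proofs of Propositions~\ref{prop:qsvo} and~\ref{prop:svqo}. The morphism now carries \emph{two} quantifier interpretations, $\overline{\semantics{d}}$ for the subject and $\overline{\semantics{d'}}$ for the object, each fed by a copy map $\delta$ and paired with a merge map $\mu$, while the transitive verb $\overline{\semantics{v}}$ receives copies of both the subject and the object. Concretely I would start from the composite obtained by tensoring the object-quantification sub-diagram of Proposition~\ref{prop:svqo} onto the subject-quantification sub-diagram of Proposition~\ref{prop:qsvo}, evaluate it on the all-$\star$ tuple, and expand it into a single nested $\max\!\min$ over subset assignments.

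First I would run the reduction in the same way as before: each $\delta$ forces its outputs to coincide, each $\mu$ forces its output to be the intersection of its two inputs, and the $\epsilon$, $\eta$ maps sift out the reflexive pairs; since all of $\delta,\mu,\epsilon,\eta$ return $\e$ in their best case and $\e$ is the unit of $\min$, these auxiliary maps drop out of the $\max\!\min$ once the maximizing subset assignment is selected. The object-side $\epsilon$ closes the object loop and places the intersection $\semantics{np'}\cap\semantics{v}[\semantics{np}]$ inside $\overline{\semantics{d'}}$, exactly as in Proposition~\ref{prop:svqo}; the subject-side $\epsilon$ closes the subject loop and places the corresponding intersection inside $\overline{\semantics{d}}$, exactly as in Proposition~\ref{prop:qsvo}.

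Next I would apply Definition~\ref{def:fuzzymodel} to unfold the terminal interpretations into their proportions, and then invoke conservativity (Remark~\ref{remark:cons}) to replace each $Proportion(A\cap B\mid A)$ by $Proportion(B\mid A)$. Maximizing the leading proportions then factors over $A=\semantics{np}$, $B=\semantics{v}[\semantics{np}]$, $C=\semantics{np'}$ precisely as in the two propositions, collapsing the whole $\max\!\min$ to the iterated quantifier value
\[
\semantics{d}\Big[\,Proportion\big(\semantics{d'}[\,\semantics{v}[\semantics{np}]\mid\semantics{np'}\,]\ \big|\ \semantics{np}\big)\,\Big],
\]
which is Zadeh's fuzzy semantics of the doubly-quantified sentence under the assumed possibility-distribution identifications $\Pi_{Proportion(\,\cdot\,\mid\,\cdot\,)}=d,d'$.

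The main obstacle, and the reason this is stated as a corollary rather than proved from scratch, is the bookkeeping of the enlarged diagram together with the scope question: because the verb consumes copies of \emph{both} arguments, the two quantifier scopes are not literally independent, so one must verify that the $\delta$/$\mu$/$\epsilon$ wiring routes the copies so that the object reduction of Proposition~\ref{prop:svqo} and the subject reduction of Proposition~\ref{prop:qsvo} can be carried out without the maximizing assignment of one interfering with that of the other. Once the wiring is fixed, every reduction step coincides with a step already justified in the two propositions, so beyond tracking indices no genuinely new computation is needed.
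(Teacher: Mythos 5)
Your overall strategy coincides with the paper's: the paper likewise tensors the subject-side and object-side sub-diagrams, evaluates on the all-$\star$ tuple, expands into the single $\max\min$
\[
\max_{(A,B),(C,D)}
\min
\Big(
\star \overline{\semantics{np}} A,\
A \overline{\semantics{d}}\, A \cap B,\
\star \overline{\semantics{v}} (B, D),\
\star \overline{\semantics{np'}} C,\
C \overline{\semantics{d'}}\, C \cap D
\Big),
\]
and then applies Definition \ref{def:fuzzymodel} and conservativity exactly as you describe. The gap is in your final collapse. After the maximising assignments $A = \semantics{np}$, $C = \semantics{np'}$, $D = \semantics{v}(B)$ and conservativity, what survives is a maximum over the single shared variable $B$ of a \emph{min of two side-by-side quantifier evaluations},
\[
\max_{B}\ \min\Big(\semantics{d}\big[Proportion(B\,|\,\semantics{np})\big],\ \semantics{d'}\big[Proportion(\semantics{v}(B)\,|\,\semantics{np'})\big]\Big),
\]
in which neither quantifier is in the scope of the other; they are coupled only through $B$. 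The paper resolves this as the verb applied to the two quantified arguments, $\semantics{v}[\semantics{d}[\semantics{np}],\semantics{d'}[\semantics{np'}]]$, under the identifications $\Pi_{Proportion(\semantics{v}[\semantics{np}]|\semantics{np})} = d$ and $\Pi_{Proportion(\semantics{\tilde{v}}|\semantics{d'}[\semantics{np'}])} = d'$ with $\semantics{\tilde{v}} = \semantics{v}[\semantics{d}(\semantics{np})]$; this is what its worked example $\semantics{eat}[\semantics{several}[\semantics{mice}],\semantics{most}[\semantics{plants}]]$ computes.

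Your final formula $\semantics{d}\big[Proportion(\semantics{d'}[\cdots]\,|\,\semantics{np})\big]$ instead nests $d'$ inside the proportion that feeds $d$ — an iterated, subject-wide-scope reading. This does not follow from the $\max\min$ you derived: that expression is symmetric in the two quantifier blocks, so no legitimate reduction step can place $d'$ under $d$, and indeed it contradicts your own (correct) observation that the two maximisations do not interfere. It also fails to type-check. In Propositions \ref{prop:qsvo} and \ref{prop:svqo}, $\semantics{d'}[\cdot]$ evaluates a possibility distribution at a number and returns a degree in $[0,1]$, so $Proportion(\semantics{d'}[\cdots]\,|\,\semantics{np})$ would be the proportion of a scalar in a fuzzy set; even on the charitable arg-max reading (where $\semantics{d'}[\cdots]$ is a fuzzy set, as in the paper's example), that set lives in the object coordinate and cannot meaningfully be measured against the subject noun $\semantics{np}$. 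The repair is to stop one step earlier: keep the two quantifier terms inside a single $\min$ over the shared variable, and identify the realised maximum with $\semantics{v}[\semantics{d}[\semantics{np}],\semantics{d'}[\semantics{np'}]]$ as the paper does.
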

\begin{proof}
The proof is obtained by applying propositions \ref{prop:qsvo} and \ref{prop:svqo}. After several steps of computation, we obtain that the many valued semantics of ``\emph{d np v d' np'}'' is
\[
\max_{(A,B),(C,D)}
\min 
\Big(
\star \overline{\semantics{np}} A, 
A \overline{\semantics{d}} A \cap B,
\star \overline{\semantics{v}} (B, D),
\star \overline{\semantics{np'}} C, 
C \overline{\semantics{d'}} C \cap D
\Big) 
\]
The above  unfolds and simplifies as before. When the maximum of the min set is realised, we obtain equivalence with the following
\[
\semantics{v}[\semantics{d}[\semantics{np}],\semantics{d'}[\semantics{np'}]]
\]
for $\Pi_{Proportion(\semantics{\tilde{v}}|\semantics{d'}[\semantics{np'}])} = d'$ \ and \ $\Pi_{Proportion(\semantics{v}[\semantics{np}]|\semantics{np})} = d$ and where $\semantics{\tilde{v}}$ is the image of $v$ on ``\emph{d np}", given by
\[
\semantics{\tilde{v}} = \semantics{v}[\semantics{d}(\semantics{np})]
\]
\end{proof}
An example of this case is  ``Several mice eat most plants" which has the same semantics as ``most plants are eaten by several mice". Given that  the fuzzy sets representing mice and plants are as before and taking the same fuzzy relation  for $\semantics{\text{eat}}$, we compute the meaning of this sentence. Suppose  further  that $\semantics{\text{most}}$ is a possibility distribution that assigns the value $0$ to numbers below $0.5$, and gradually increasing the value for numbers from $0.5$ to $1$. 

First, we compute the application of the quantifiers to their respective noun phrases:
\begin{align*} \semantics{several}[\semantics{mice}] = & \\
\operatorname*{arg\,max}_B \Big( \semantics{several} & \Big[ \frac{\Sigma Count(\semantics{mice} \cap B)}{\Sigma Count(\semantics{mice})} \Big] \Big) 
\end{align*}
\noindent 
If we assume that ``several" has the highest value for $0.4$, then it would for instance assign to the set $0.4 \semantics{mice}$ the value  $\Sigma_i 0.4 \mu_i u_i$ for $\mu_i u_i$ in $\semantics{mice}$. The second application gives
\begin{align*} \semantics{most}[\semantics{plants}] = \\
\operatorname*{arg\,max}_A \Big( \semantics{most} & \Big[ \frac{A \cap \semantics{plants}}{\Sigma Count(\semantics{plants})} \Big] \Big) 
\end{align*}
This will set $A = \semantics{plants}$, given that $1$ has the highest probability of being ``most".

The value of the whole sentence will be the verb applied to the quantified subject and object, hence we obtain
\begin{align*}
&\semantics{eat} \Big[\semantics{several}[\semantics{mice}], \semantics{most}[\semantics{plants}] \Big] \\
= &\semantics{eat} \Big[0.4 \semantics{mice}, \semantics{plants} \Big]\\
= &\max_{a,b} \min ( \mu_{0.4 \semantics{mice}}(a), \mu_{\semantics{eat}}(a,b), \mu_{\semantics{plants}}(b))\\
= &\max \Big( \min(0.28,0.5,0.2), \min(0.28, 0.8, 0.6), \\
&\min (0.24,0.2,0.2),  \min (0.24,0.3,0.6), \\
&\min (0.08,0.9,0.6) \Big)\\
=  &\max  (0.2, 0.28, 0.2,  0.24, 0.08)\\
= &0.28
\end{align*}
\noindent 
This means that  the extent to which several mice eat most plants is 28\%.

We conclude this section by   defining  the notion of a  degree of truth for sentences in $\VRel$ and noting that in the absolute case, it is the same as the truth value of the sentence in $\Rel$.

\begin{definition}
A quantified sentence $s$ has a degree of  truth $r$ iff $\overline{\semantics{s}} = r$ in $(\VRel, {\cal P}(U), \{\star\}, \overline{\semantics{\ }})$. 
\end{definition}

\begin{remark}
Suppose $ \overline{\semantics{s}}  = 1$ in $(\VRel, {\cal P}(U), \{\star\}, \overline{\semantics{\ }})$ and consider a sentence of the form ``d np v"; as proved in Proposition \ref{prop:qsvo} and by the above definition, this is the case iff we have $ \semantics{d}[Proportion(\semantics{vp}/\semantics{np})] = 1$\ for \ $\Pi_{Proportion(\semantics{vp} /\semantics{np})} = d$. This means that the proportion of elements of $\semantics{vp}$ that are in $\semantics{np}$ is $d$. Recall that our quantifiers are conservative, thus  the proportion of elements of $\semantics{vp} \cap \semantics{np}$ that are in $\semantics{np}$ is also $d$, which means $\semantics{vp} \cap \semantics{np} \in \semantics{d}[\semantics{np}]$, and according to Definition \ref{def:truth-genquant} this makes the sentence ``d np vp" have truth value \emph{true}.  In \cite{HedgesSadr}, authors  showed that this is equivalent  to $\overline{\semantics{s}} = t$ in $(\Rel, {\cal P}(U), \{\star\}, \overline{\semantics{\ }})$. The other direction holds in a similar fashion: suppose $\overline{\semantics{s}} = t$ in $(\Rel, {\cal P}(U), \{\star\}, \overline{\semantics{\ }})$, this is iff (as shown in \cite{HedgesSadr}),  the interpretation of ``d np vp"  is true in $\Rel$, which is iff $\semantics{vp} \cap \semantics{np} \in \semantics{d}[\semantics{np}]$, which by generalised quantifier theory means that $\semantics{vp} \cap \semantics{np} $ has $d$ elements of $\semantics{np}$, which will then make $ \semantics{d}[Proportion(\semantics{vp}/\semantics{np})]$ to be 1.  The case for sentences of the form ``np v d np" and ``d np v d' np" are similar. 
\end{remark}

\section{Conclusions and Future Work}
\label{sec:Intro}

In recent work \cite{HedgesSadr} showed how one can reason about generalised quantifiers using bialgebras over the category of sets and relations over a fixed powerset object (powerset of a universe of discourse).  They  developed an abstract categorical semantics and   instantiated it to  category of sets and relations.  Via the \emph{Set-to-Vector Space} and \emph{Relation-to-Linear Map} embedding, they transferred this semantics   from sets and relations to vectors and linear maps.  Their resulting  vectorial semantics, however,   is  hard to reason with and costly to implement, a fault  mainly due to the fact that in order to keep the  maps linear, they  had to work with vector spaces over powerset objects.

The reason for transferring the formal  semantics of natural language from sets and relations to  vector spaces and linear maps in compositional distributional semantics is to allow for quantitative reasoning in terms of the statistical data provided in  distributional semantics.  Another way to work with quantities and distributions  of data is to move to a fuzzy setting, as done in \cite{Zadeh1983}.  But then the question arises whether these two semantics are the same. This is the question to which  this  paper answers in positive. Concretely, in this paper we have shown  that the categorical version of fuzzy sets, that is category $\VRel$  of sets and many valued relations,  is compact closed  and defined over it the required  bialgebras. We  developed, within this category,  a many valued version of the abstract compact closed categorical semantics of \cite{HedgesSadr} with  Zadeh's fuzzy quantifiers  and showed  that the   two semantics amount to the same degrees of truth for quantified sentences.  As a result, in order to do quantification in compositional distributional semantics, one is not restricted to working with vector spaces over powerset objects and furthermore, fuzzy quantification is now also added to the existing setting.

A practical question that arises  is  what the   empirical statistical  consequences of embedding  $\FdVect$ in $\VRel$ are. In order to answer this question, we have  to work alongside  intuitions such as ``a distributional vector for a target word $w$ yields a fuzzy set whose degrees of membership are the degrees of co-occurrences of $w$ with a set of context words $c$,  or the degrees of similarity of  $w$  to $c$, or the degrees of contextual relevance of $w$ to $c$", or other similar readings.  Formally, one has to work alongside  the following diagram and   with category of matrices over reals; these are equivalent to $\VRel$ and a special case of category of $\FdVect$. 
\begin{displaymath}
\xymatrix
{
\Rel  \ar@{^{(}->}[rr]^{\mbox{embeds in}} \ar@{.>}[dd]_{\mbox{generalises to}}   && \FdVect \ar@{.>}[dd]^{\mbox{restricts to}}  \\
&&\\
\VRel  \ar[rr]_{\cong} && \text{\bf Mat}(\mathbb{R})
}
\end{displaymath}
Building on the above ideas and implementing  our model on real data and experimenting with it constitutes future work.

Another future direction is to  build on the above intuitions and use the logic of fuzzy sets to develop  a logic for distributional data.  The quantified vectorial setting of  \cite{HedgesSadr} did  not allow for a natural notion of logic: the main vector space, which was spanned by a power set object,  did not have a natural interpretation  of  union and intersection of basis vectors in terms of  basic set theoretic operations. A fuzzy setting, however, gives rise to a fuzzy logic and provides semantics for  coordination in natural language, via operations such  fuzzy conjunction and disjunction, e.g. see \cite{Novak}.  Exploring the theoretical  corollaries of this fact and experimenting with them constitutes future work.

%

\section{Acknowledgments}

Support by EPSRC for Career Acceleration Fellowship  EP/J002607/2, and by AFOSR for International Scientific Collaboration Grant FA9550-14-1- 0079 is gratefully acknowledged by M.S. Support by a QMUL  PhD Scholarship is acknowledged by G.W. The authors are grateful for the fruitful comments of an anonymous reviewer. 

\bibliography{quant.bib}

\begin{thebibliography}{10}
\providecommand{\url}[1]{{#1}}
\providecommand{\urlprefix}{URL }
\expandafter\ifx\csname urlstyle\endcsname\relax
  \providecommand{\doi}[1]{DOI \discretionary{}{}{}#1}\else
  \providecommand{\doi}{DOI \discretionary{}{}{}\begingroup
  \urlstyle{rm}\Url}\fi

\bibitem{Firth}
J.~Firth, in \emph{Studies in Linguistic Analysis} (The University of Chicago
  Press Journals, 1957)

\bibitem{Rubenstein}
H.~Rubenstein, J.~Goodenough, Communications of the ACM \textbf{8}(10), 627
  (1965)

\bibitem{Salton}
G.~Salton, A.~Wong, C.S. Yang, Commun. ACM \textbf{18}, 613 (1975)

\bibitem{Curran}
J.~Curran, From distributional to semantic similarity.
\newblock Ph.D. thesis, School of Informatics, University of Edinburgh (2004)

\bibitem{Turney}
P.D. Turney, Computational Linguistics \textbf{32}(3), 379 (2006)

\bibitem{Schutze}
H.~Schuetze, Computational Linguistics \textbf{24}(1), 97 (1998)

\bibitem{Weeds}
J.~Weeds, D.~Weir, D.~McCarthy, in \emph{Proceedings of the 20th International
  Conference on Computational Linguistics} (Association for Computational
  Linguistics, 2004), COLING '04

\bibitem{Geffet}
M.~Geffet, I.~Dagan, in \emph{Proceedings of the 43rd Annual Meeting on
  Association for Computational Linguistics} (Association for Computational
  Linguistics, 2005), ACL '05, pp. 107--114

\bibitem{Kotlerman}
L.~Kotlerman, I.~Dagan, I.~Szpektor, M.~Zhitomirsky-Geffet, Natural Language
  Engineering \textbf{16}(4), 359 (2010)

\bibitem{Landauer}
T.~Landauer, S.~Dumais, Psychological Review  (1997)

\bibitem{lin1998automatic}
D.~Lin, in \emph{Proceedings of the 17th international conference on
  Computational linguistics-Volume 2} (Association for Computational
  Linguistics, 1998), pp. 768--774

\bibitem{ClarkPulman}
S.~Clark, S.~Pulman, in \emph{Proceedings of the AAAI Spring Symposium on
  Quantum Interaction} (2007), pp. 52--55

\bibitem{BaroniZam}
M.~Baroni, R.~Zamparelli, in \emph{Conference on Empirical Methods in Natural
  Language Processing (EMNLP-10)} (Cambridge, MA, 2010)

\bibitem{Maillard2014}
J.~Maillard, S.~Clark, E.~Grefenstette, EACL 2014 Type Theory and Natural
  Language Semantics Workshop  (2014)

\bibitem{Krishnamurthy2013}
J.~Krishnamurthy, T.M. Mitchell, in \emph{Proceedings of the 2013 ACL Workshop
  on Continuous Vector Space Models and their Compositionality} (2013)

\bibitem{Steedman2013}
M.~Lewis, M.~Steedman, Transactions of the Association for Computational
  Linguistics \textbf{1}, 179 (2013).
\newblock \urlprefix\url{http://aclweb.org/anthology//Q/Q13/Q13-1015.pdf}

\bibitem{Coeckeetal}
B.~Coecke, M.~Sadrzadeh, S.~Clark, Linguistic Analysis \textbf{36}, 345 (2010)

\bibitem{baroni-etal-2014}
M.~Baroni, R.~Bernardi, R.~Zamparelli, Linguistic Issues in Language Technology
  \textbf{9}, 5 (2014)

\bibitem{MultiStep}
E.~Grefenstette, G.~Dinu, Y.~Zhang, M.~Sadrzadeh, M.~Baroni, in \emph{10th
  International Conference on Computational Semantics (IWCS)} (Postdam, 2013)

\bibitem{GrefenSadrCL}
E.~Grefenstette, M.~Sadrzadeh, Computational Linguistics \textbf{41}, 71 (2015)

\bibitem{KartSadr}
D.~Kartsaklis, M.~Sadrzadeh, in \emph{Proceedings of Conference on Empirical
  Methods in Natural Language Processing (EMNLP)} (2013)

\bibitem{KartSadrLACL}
D.~Kartsaklis, M.~Sadrzadeh, in \emph{Proceedings of the 9th Conference on
  Logical Aspects of Computational Linguistics (LACL)} (Springer, 2016),
  Lecture Notes in Computer Science.
\newblock To appear

\bibitem{Bankovaetal}
D.~Bankova, B.~Coecke, M.~Lewis, D.~Marsden, arXiv preprint arXiv:1601.04908
  (2016)

\bibitem{HedgesSadr}
J.~Hedges, M.~Sadrzadeh, Mathematical Structures in Computer Science
  \textbf{29}, 783 (2019)

\bibitem{Sadrzadeh16}
M.~Sadrzadeh, in \emph{Proceedings of the 2016 Workshop on Semantic Spaces at
  the Intersection of NLP, Physics and Cognitive Science, SLPCS@QPL 2016,
  Glasgow, Scotland, 11th June 2016.} (2016), pp. 49--57.
\newblock \doi{10.4204/EPTCS.221.6}.
\newblock \urlprefix\url{http://dx.doi.org/10.4204/EPTCS.221.6}

\bibitem{KellyLaplaza80}
G.~Kelly, M.~Laplaza, Journal of Pure and Applied Algebra \textbf{19}(0), 193
  (1980).
\newblock \doi{10.1016/0022-4049(80)90101-2}.
\newblock
  \urlprefix\url{http://www.sciencedirect.com/science/article/pii/0022404980901012}

\bibitem{McCurdy}
M.~McCurdy, Theory and applications of categories \textbf{26}(9), 233 (2012)

\bibitem{bonchi14}
F.~Bonchi, P.~Sobocinski, F.~Zanasi, in \emph{Proceedings of {FoSSaCS} 2014},
  vol. 8412 (Springer, 2014), vol. 8412, pp. 351--365

\bibitem{BarwiseCooper81}
J.~Barwise, R.~Cooper, Linguistics and Philosophy \textbf{4}, 159 (1981)

\bibitem{Zadeh1983}
L.A. Zadeh, Computers \& Mathematics with Applications \textbf{9}(1), 149
  (1983).
\newblock \doi{http://dx.doi.org/10.1016/0898-1221(83)90013-5}.
\newblock
  \urlprefix\url{http://www.sciencedirect.com/science/article/pii/0898122183900135}

\bibitem{Novak}
V.~Novák, \emph{Fuzzy Sets in Natural Language Processing} (Springer, 1992),
  \emph{The Springer International Series in Engineering and Computer Science},
  vol. 165, pp. 185--200

\bibitem{Cock}
M.D. Cock, U.~Bodenhofer, E.E. Kerre, in \emph{in ?Proceedings of the 6th
  International Conference on Soft Computing} (2000), pp. 353--360

\bibitem{Zadeh:1996}
L.A. Zadeh, Trans. Fuz Sys. \textbf{4}, 103 (1996)

\bibitem{Bezdek1978}
J.C. Bezdek, J.D. Harris, Fuzzy Sets and Systems \textbf{1}(2), 111 (1978)

\bibitem{DostalSadr}
M.~Dostal, M.~Sadrzadeh, Many valued generalised quantifiers for natural
  language in the discocat model.
\newblock Tech. rep., Queen Mary University of London (2016).
\newblock \urlprefix\url{http://qmro.qmul.ac.uk/xmlui/handle/123456789/17382}

\bibitem{MarionSadr2004}
M.~Marion, M.~Sadrzadeh, in \emph{Logic, Epistemology, and the Unity of
  Science} (Kluwer Academic Publishers, 2004)

\bibitem{Sadr2003}
M.~Sadrzadeh, in \emph{proceedings of the {16h International Conference on
  Theorem Proving in Higher Order Logics}, Emerging Trends} (ARACNE, 2003), no.
  187 in Universitat Freiburg, pp. 75--93

\bibitem{SadrThesis}
M.~Sadrzaadeh, Actions and resources in epistemic logic.
\newblock Ph.D. thesis, Universit\'e du Qu\'ebec A Montr\'eal (2006)

\bibitem{DyckhoffSaftTruf2013}
R.~Dyckhoff, M.~Sadrzadeh, J.~Truffaut, {ACM} Trans. Comput. Log.
  \textbf{14}(4), 34:1 (2013).
\newblock \doi{10.1145/2536740.2536742}.
\newblock \urlprefix\url{https://doi.org/10.1145/2536740.2536742}

\bibitem{Godel1932}
K.~G\"odel, Anzeiger Akademie Der Wissenschaften Wien \textbf{69}, 65 (1932)

\bibitem{Dummett1959}
M.~Dummett, Journal of Symbolic Logic \textbf{24}, 97 (1959)

\bibitem{Zadeh1965}
Z.~Lotfi~A, Information and Control \textbf{8}, 338–353 (1965)

\bibitem{Kock72}
A.~Kock, Archiv der Mathematik \textbf{23}, 113 (1972)

\bibitem{Marsden2017}
D.~Marsden, F.~Genovese.
\newblock Custom hypergraph categories via generalized relations.
\newblock url:https://arxiv.org/abs/1703.01204 (2017)

\bibitem{BarrFeigenbaum1983}
A.~Barr, E.W. Feigenbaum, \emph{The Handbook of Artificial Intelligence}, vol.
  1-3 (Kaufmann, 1982)

\end{thebibliography}
\bibliographystyle{spphys}
\end{document}